\def\BibTeX{{\rm B\kern-.05em{\sc i\kern-.025em b}\kern-.08em
    T\kern-.1667em\lower.7ex\hbox{E}\kern-.125emX}}
\def\comp{\raise 1pt \hbox{$\scriptstyle\circ$}}
\def\argmin{\mathop{\rm argmin}\limits}
\def\minimize{\mathop{\rm min}\limits}
\def\st{\mathop{\rm subject\ to}}
\def\upto{{\raise 1pt \hbox{$\scriptstyle \,\nearrow\,$}}}
\def\downto{{\raise 1pt \hbox{$\scriptstyle \,\searrow\,$}}}
\begin{document}

\newtheorem{theorem}{Theorem}
\newtheorem{definition}{Definition}

\newcommand{\norm}[1]{\left\lVert#1\right\rVert}
\newcommand{\boldx}{\mbox{$\mathbf{x}$}}
\newcommand{\boldy}{\mbox{$\mathbf{y}$}}
\newcommand{\boldf}{\mbox{$\mathbf{f}$}}
\newcommand{\boldz}{\mbox{$\mathbf{z}$}}
\newcommand{\boldF}{\mbox{$\mathbf{F}$}}
\newcommand{\boldG}{\mbox{$\mathbf{G}$}}
\newcommand{\boldg}{\mbox{$\mathbf{g}$}}
\newcommand{\boldh}{\mbox{$\mathbf{h}$}}
\newcommand{\boldH}{\mbox{$\mathbf{H}$}}
\newcommand{\boldzero}{\mbox{$\mathbf{0}$}}
\newcommand{\Rbb}{\mbox{$\mathbb R$}}

\newcommand{\boldq}{\mbox{$\mathbf{q}$}}
\newcommand{\boldtau}{\mbox{\boldmath$\tau$}}

\title{A Linear Programming Enhanced Genetic Algorithm for Hyperparameter Tuning in Machine Learning}

\author{\IEEEauthorblockN{Ankur Sinha, IEEE Senior Member}
\vspace{-3mm}
\IEEEauthorblockA{\textit{} \\
\textit{Brij Disa Centre for Data Science and Artificial Intelligence}\\
Indian Institute of Management Ahmedabad\\
Gujarat, India 380015 \\
asinha@iima.ac.in}
\and
\IEEEauthorblockN{Paritosh Pankaj}
\vspace{-3mm}
\IEEEauthorblockA{\textit{} \\
\textit{Department of Statistics and Data Science}\\
Indian Institute of Technology Kanpur\\
Uttar Pradesh, India 208016\\
ppankaj21@iitk.ac.in}
}

\IEEEoverridecommandlockouts

\maketitle

\IEEEpubidadjcol

\begin{abstract}
In this paper, we formulate the hyperparameter tuning problem in machine learning as a bilevel program. The bilevel program is solved using a micro genetic algorithm that is enhanced with a linear program. While the genetic algorithm searches over discrete hyperparameters, the linear program enhancement allows hyper local search over continuous hyperparameters. The major contribution in this paper is the formulation of a linear program that supports fast search over continuous hyperparameters, and can be integrated with any hyperparameter search technique. It can also be applied directly on any trained machine learning or deep learning model for the purpose of fine-tuning. We test the performance of the proposed approach on two datasets, MNIST and CIFAR-10. Our results clearly demonstrate that using the linear program enhancement offers significant promise when incorporated with any population-based approach for hyperparameter tuning.
\end{abstract}

\begin{IEEEkeywords}
Bilevel optimization, genetic algorithms, machine learning, hyperparameter tuning, linear program.
\end{IEEEkeywords}

\section{Introduction}\label{introduction}
Hyperparameter optimization is an incredibly challenging task in machine learning, as hyperparameters are external to the model and can't be determined based on the training data alone. These common hyperparameters include, network architecture (for example, number of layers and number of neurons per layer), optimization parameters (for example, learning rate and momentum), and regularization parameters (for example, weight decay and dropout). The most common approach to identify the right set of hyperparameters involves training models with different hyperparameters on the training dataset and then evaluating the models on the validation dataset. The best performing hyperparameters are chosen.


The hyperparameter optimization problem is intrinsically a bilevel optimization task where the upper level problem searches for the optimal hyperparameters and the lower level problem searches for the optimal model parameters for the corresponding hyperparameters. In the context of evolutionary algorithms as well a number of algorithm hyperparameters have to be tuned and their optimal choice can be made using a bilevel optimization approach.
Formulating the hyperparameter optimization problem as a bilevel optimization task is a familiar approach in machine learning \cite{bennett08} and also in evolutionary computation \cite{gecco14}.

A bilevel optimization problem involves two levels of optimization with each level having its own objective function, set of variables, and set of constraints. A large body of literature exists on bilevel optimization for which the readers may refer to \cite{sinha2017review,dempe2002foundations,bard2013practical}. A bilevel optimization problem is challenging because the upper level variables appear as parameters in the lower level optimization problem, while the lower level problem has to be optimized with respect to the lower level variables. Solving the lower level optimization problem for a given set of upper level variables and ensuring that the upper level constraints are satisfied lead to a feasible solution to the bilevel optimization problem. Linear bilevel programs \cite{FoMc81,WeHs91,TuMiVa93} and quadratic bilevel problems \cite{EdBa91,AlHoPa92} are widely solved using Karush-Kuhn-Tucker based approach. Researchers have looked at other approaches based on gradients \cite{savard1994steepest}, penalty \cite{WhAn93,my-swevo18,kleinert2021computing}, trust-region \cite{marcotte2001trust,colson2005trust}, among others, to solve bilevel optimization problems. In the domain of evolutionary algorithms, there are a number of nested approaches \cite{my-caor14,angelo13,islam2017surrogate} that are used to handle bilevel optimization problems; however, these methods involve solving a large number of lower level optimization problems. Recently, the algorithm development on evolutionary bilevel optimization has focused on exploiting the lower level reaction set mapping and the lower-level optimal value function mapping \cite{sinha2021solving,my-joh20,my-ejor17,angelo2014differential}. To begin with, we will formulate the hyperparameter optimization problem as a bilevel problem and will stick to the nomenclature commonly used by the machine learning community. Let the upper level variables (hyperparameters) be denoted by $\lambda$ and the lower level variables (model parameters) be denoted by $w$. If the upper level objective (validation loss) is given as $F(\lambda,w)$ and the lower level objective (training loss) is given as $f(\lambda,w)$, then the hyperparameter optimization problem is defined as follows:
\begin{align}\label{mod:originalBilevel}
\begin{split}
\minimize_{\lambda,w} \quad & F(\lambda,w; S^V) \\
\st\quad  & \\
 & \hspace{-12mm} w \in \argmin_w \{f(\lambda,w; S^{T})\}
 \end{split}
\end{align}
where $S_{T}$ represents the set of training examples and $S_{V}$ represents the set of validation examples. Under general cases, the upper level and the lower level problems may contain constraints as well. 

In this paper, we will consider two types of hyperparameters to demonstrate our ideas. The first set of hyperparameters, denoted by $\lambda_{d}$, would include discrete hyperparameters for which we have chosen the network architecture parameters (number of layers and number of neurons). The second set of hyperparameters, denoted by $\lambda_{c}$, would include continuous hyperparameters for which we have chosen regularization hyperparameters (weight decay). In our proposed approach, we will handle the discrete and continuous parameters using a micro genetic algorithm (micro-GA), and perform a hyper local search for the continuous hyperparameters using a linear programming approach. The main contribution in this paper is the design of the linear program that would help us perform hyper local search within the micro-GA.

A wide range of techniques exists to address hyperparameter optimization problem defined in \eqref{mod:originalBilevel}. The
popular strategies include naive methods, like grid search and random search, where a number of hyperparameter vectors are sampled from the hyperparameter space and models are optimized on the training dataset for each of the sample hyperparameter vector. The models are then evaluated on the validation dataset and the hyperparameter vector that leads to the least validation loss is chosen. Bergstra et al. \cite{bergstra2011algorithms} demonstrated that the random search surpasses grid search in terms of computational performance, thus making it preferable. Hyperband \cite{li2017hyperband}, which is an extension of random search, intelligently allocates computational resources to promising configurations via a multi-armed bandit technique while searching for the best hyperparameters. Bayesian optimization happens to be the gold standard for hyperparameter optimization \cite{hutter2011sequential,bergstra2011algorithms,snoek2012practical,snoek2015scalable}. In these methods, a probabilistic model is created for the validation objective based on which an informed decision is made for sampling the next hyperparameter vector. A common requirement in most of these approaches is to strike the right balance between exploration and exploitation, as heavy exploration tends to be computationally very costly in hyperparameter optimization. Common methods used for probabilistic estimation of the objective function includes, tree Parzen estimator \cite{bergstra2011algorithms}, Gaussian process estimation \cite{snoek2012practical}, and sequential model-based approach \cite{hutter2011sequential}. Most of these methods, model-free and model-based, suffer from the curse of dimensionality and perform poorly with increase in number of hyperparameters \cite{maclaurin2015gradient}. These approaches are also referred to as black-box approaches as they do not utilize the underlying structure of the hyperparameter optimization problem. Response surface estimation-based \cite{sinha2020gradient,mackay2019self,franceschi2018bilevel,lorraine2018stochastic,maclaurin2015gradient} and hypergradient-based \cite{pedregosa2016hyperparameter,bengio2000gradient} optimization approaches have also been popular lately in the context of hyperparameter optimization. Some aspects of our study fall within the domain of gradient estimation for the bilevel hyperparameter optimization problem. Further references on hyperparameter optimization can be found in \cite{feurer2019hyperparameter}.




The paper is organized as follows. To begin with, we propose the model fine-tuning approach with respect to continuous hyperparameters in Section~\ref{sec:localsearch}, followed by a detailed discussion on its integration in micro-GA in the form of hyper local search in Section~\ref{propmethod}. 
In Section~\ref{sec:experiments} we provide extensive results on two datasets to demonstrate the effectiveness of our approach. Finally, we conclude the paper in Section~\ref{sec:conclusions}. The central notations used in our study are summarized in Table \ref{tab:centralNotation}.

\begin{table*}[]
\centering
\caption{Central Notation}
\label{tab:centralNotation}
\resizebox{\textwidth}{!}{%
\begin{tabular}{@{}lll@{}}
\toprule
\textbf{Category}                   & \textbf{Notation} & \multicolumn{1}{c}{\textbf{Description}} \\ \midrule
\multirow{4}{*}{Dataset}           & \multirow{2}{*}{$S_{T}$}       &   \begin{tabular}[c]{@{}l@{}}$S_{T}= \{(x_i,y_i)\}_{i = 1}^{N^T}$; training set, where $x$ and $y$ are combination of \\input features and output classes, and $N^T$ is the number of training examples\end{tabular}\\\cmidrule{2-3}
                                    &  \multirow{2}{*}{$S_{V}$}        & \begin{tabular}[c]{@{}l@{}} $S_{V}= \{(x_i,y_i)\}_{i = 1}^{N^V}$; validation set, where $x$ and $y$ are combination of \\input features and output classes, and $N^V$ is the number of validation examples\end{tabular}\\ \cmidrule{1-3}
\multirow{2}{*}{Bilevel variables} & $\lambda = (\lambda_d,\lambda_c)$          & discrete and continuous hyperparameters (upper level variables) \\\cmidrule{2-3}
                                    & $ w$               & model parameters (lower level variables)                      \\ \cmidrule{1-3}
\multirow{2}{*}{Objectives}         & $F(\lambda,w; S^V)$                 & upper level objective function                  \\ \cmidrule{2-3}
                                    & $f(\lambda,w; S^T)$                 & lower level objective function                      \\\cmidrule{1-3}
Loss function                       & $l$                 &  training loss $l(w; S^T)$ and validation loss $l(w; S^V)$                \\ \cmidrule{1-3}
Regularization                      & $\Theta$                  &  regularization function (L$_2$ regularization is used in this paper)                 \\\cmidrule{1-3}
\multirow{2}{*}{Direction vectors} & $d_{\lambda_c}$          & the descent (with respect to $F$) direction vector for continuous hyperparameters \\\cmidrule{2-3}
                                    & $d_{w}$               & the descent (with respect to $F$) direction vector for model parameters                    \\
 \bottomrule
\end{tabular}
}
\end{table*}

\section{Fine-tuning Machine Learning Models}\label{sec:localsearch}
In this section, we discuss a model fine-tuning approach that can be applied on any machine learning model that has been learned on a given training dataset. The approach works by refining the chosen continuous hyperparameters and the optimized model parameters in its vicinity using a linear programming approach. Let us focus only on continuous hyperparameters that can be varied and let us assume the discrete hyperparameters to be fixed. In this case, the bilevel optimization problem for optimizing the continuous hyperparameters can be written as follows:
\begin{align}\label{mod:continuousHyp}
\begin{split}
\minimize_{\lambda_c,w} \quad & F(\lambda_c,w; S^V) \\
\st\quad  & \\
 & \hspace{-12mm} w \in \argmin_w \{f(\lambda_c,w; S^{T})\}
 \end{split}
\end{align}
Let us say that for a given value of the continuous hyperparameter, $\lambda_{c}^{\circ}$, we optimize the model parameters, i.e. solve the lower level problem, and obtain $w^{\circ}$. We wish to fine-tuning the model $M(\lambda_{c}^{\circ},w^{\circ})$, by moving in a direction, $(d_{\lambda_c},d_w)$, such that we obtain a new model $M(\lambda_{c}^{\circ}+td_{\lambda_c},w^{\circ}+td_w)$ (for some non-negative value of $t$) that provides a better upper level function value and $w^{\circ}+td_w$ remains optimal for $\lambda_{c}^{\circ}+td_{\lambda_c}$. It would be ideal to choose the direction in such a way that it leads to the steepest descent for the upper level objective function while satisfying the lower level optimality conditions. Such a direction is nothing but the negative of the gradient of the bilevel optimization problem \eqref{mod:continuousHyp}. We will next attempt to derive this direction of steepest descent.

The assumptions for the derivation are that the upper level function $F(\lambda_c,w; S^V)$ is at least once differentiable and the lower level function $f(\lambda_c,w; S^{T})$ is at least twice differentiable. We also assume that for any given value of $\lambda_c$, there always exists a solution $w \in \argmin_w \{f(\lambda_c,w; S^{T})\}$.

\begin{theorem}
At a given point $(\lambda_{c}^{\circ},w^{\circ})$, such that, $w^{\circ} \in \argmin_w \{f(\lambda_{c}^{\circ},w; S^{T})$ the steepest descent direction for~\eqref{mod:continuousHyp} can be obtained by solving the following problem:
\begin{align}\label{mod:steepestDescent1}
\begin{split}
\minimize_{d_{\lambda_c},d_w} \quad & \nabla_{\lambda_c} F(\lambda^{\circ}_{c},w^{\circ}; S^V)^{T} d_{\lambda_c} + \nabla_{w} F(\lambda^{\circ}_{c},w^{\circ}; S^V)^{T} d_{w}\\
\st  & \\
 & \hspace{-12mm} d_w \in \argmin_{d_w} \Biggl\{\left[ {\begin{array}{cc}
    d_{\lambda_{c}} \\
     d_{w} \\
  \end{array} } \right]^T \nabla_{(\lambda_c,w)}^{2} f(\lambda^{\circ}_{c},w^{\circ}; S^{T}) \left[ {\begin{array}{cc}
    d_{\lambda_{c}} \\
     d_{w} \\
  \end{array} } \right]\Biggr\}\\
& \hspace{-12mm} -1 \le d_{\lambda_c} \le 1
 \end{split}
\end{align}
\end{theorem}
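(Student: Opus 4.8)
The plan is to reduce the bilevel program~\eqref{mod:continuousHyp} to a single-level problem in $\lambda_c$ by treating the lower-level minimizer as an implicit map $w(\lambda_c)$, and then to identify the steepest descent direction of the reduced objective $F(\lambda_c, w(\lambda_c); S^V)$. The central observation is that the constraint in~\eqref{mod:steepestDescent1} forcing $d_w$ to minimize the Hessian quadratic form of $f$ is exactly the \emph{linearized lower-level optimality condition}: it characterizes the tangent direction along which $w^{\circ} + t d_w$ stays optimal for $\lambda_c^{\circ} + t d_{\lambda_c}$ to first order in $t$, which is precisely the feasibility requirement motivated in the paragraph preceding the theorem.

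First I would invoke lower-level stationarity: since $w^{\circ} \in \argmin_w f(\lambda_c^{\circ}, w; S^T)$, we have $\nabla_w f(\lambda_c^{\circ}, w^{\circ}; S^T) = 0$. Requiring that this stationarity persist along the ray $(\lambda_c^{\circ} + t d_{\lambda_c}, w^{\circ} + t d_w)$ and differentiating at $t=0$ yields the tangency condition $\nabla^2_{w\lambda_c} f \, d_{\lambda_c} + \nabla^2_{ww} f \, d_w = 0$. I would then observe that this is identical to the first-order stationarity condition of the inner problem $\min_{d_w} [\,d_{\lambda_c};\, d_w\,]^T \nabla^2_{(\lambda_c,w)} f \, [\,d_{\lambda_c};\, d_w\,]$: differentiating the quadratic form in $d_w$ and setting it to zero reproduces the same linear system. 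Under the standing assumptions that $f$ is twice differentiable and $w^{\circ}$ is a genuine minimizer (so the block $\nabla^2_{ww} f$ is positive definite), this stationary point is the unique minimizer, which establishes that the constraint in~\eqref{mod:steepestDescent1} is equivalent to maintaining lower-level optimality. Solving the linear system gives $d_w = -(\nabla^2_{ww} f)^{-1} \nabla^2_{w\lambda_c} f \, d_{\lambda_c}$, the directional analogue of the implicit-function derivative of $w(\lambda_c)$.

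Next I would linearize the upper-level objective along the same direction: its directional derivative at $(\lambda_c^{\circ}, w^{\circ})$ is $\nabla_{\lambda_c} F^T d_{\lambda_c} + \nabla_w F^T d_w$, which is exactly the objective of~\eqref{mod:steepestDescent1}. Substituting the $d_w$ found above shows this equals $\big(\nabla_{\lambda_c} F - (\nabla^2_{w\lambda_c} f)^T (\nabla^2_{ww} f)^{-1} \nabla_w F \big)^T d_{\lambda_c}$, i.e., the inner product of $d_{\lambda_c}$ with the hypergradient of the reduced objective $F(\lambda_c, w(\lambda_c); S^V)$. Minimizing this linear form over the box $-1 \le d_{\lambda_c} \le 1$ selects the steepest descent direction with respect to this normalization; the box is needed solely to keep the linear objective bounded below, since rescaling $d_{\lambda_c}$ would otherwise drive it to $-\infty$.

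The main obstacle I anticipate is making the equivalence between the quadratic-form constraint and lower-level optimality fully rigorous rather than merely formal. This rests on second-order sufficiency at $w^{\circ}$: one needs $\nabla^2_{ww} f$ positive definite so that (a) the implicit function theorem applies and $w(\lambda_c)$ is well defined and differentiable in a neighbourhood, and (b) the stationary point of the inner quadratic form is its unique global minimizer rather than a saddle. Degenerate cases, where $\nabla^2_{ww} f$ is only positive semidefinite or the lower-level solution is non-unique, would require extra care and are presumably ruled out by the differentiability and solvability assumptions stated immediately before the theorem.
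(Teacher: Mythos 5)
Your proposal is correct, but it travels the paper's road in the opposite direction, and the comparison is instructive. The paper derives the quadratic-form constraint directly: it writes the second-order Taylor expansion of $f$ about $(\lambda_c^{\circ},w^{\circ})$ along the ray $(\lambda_c^{\circ}+td_{\lambda_c},\,w^{\circ}+td_w)$, observes that the first-order term in $d_w$ vanishes because $\nabla_w f(\lambda_c^{\circ},w^{\circ})=0$, and concludes that minimizing $f$ over $d_w$ is, to this order, the same as minimizing the quadratic form $[d_{\lambda_c};d_w]^T\nabla^2_{(\lambda_c,w)}f\,[d_{\lambda_c};d_w]$ over $d_w$ --- which is exactly the lower-level constraint in \eqref{mod:steepestDescent1}. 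The linear tangency system $\nabla^2_{w\lambda_c}f\,d_{\lambda_c}+\nabla^2_{ww}f\,d_w=0$ that you take as your starting point appears in the paper only \emph{after} the theorem, as the first-order optimality condition used to collapse \eqref{mod:steepestDescent1} into the linear program \eqref{mod:steepestDescentLP}. You obtain that system first (by differentiating the persistence of lower-level stationarity, i.e., the implicit-function route) and then argue backward that it coincides with the stationarity condition of the quadratic form. What your route buys is the explicit hypergradient formula for the reduced objective and a sharper statement of the regularity actually needed: you correctly isolate that identifying a stationary point of the quadratic form with its minimizer requires $\nabla^2_{ww}f\succ 0$ (or at least positive semidefiniteness plus solvability), an assumption the paper never states but uses implicitly both in the theorem and in the LP reduction. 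What the paper's route buys is that it never needs to invert $\nabla^2_{ww}f$ or assert uniqueness of $d_w$; the argmin constraint is taken at face value as the second-order model of the lower-level problem. Both arguments are equally informal about the limit $t\to 0$ (neither rigorously justifies replacing the argmin of $f$ by the argmin of its second-order model), but your version at least localizes that gap in a verifiable second-order sufficiency condition, so it is, if anything, the more defensible of the two.
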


\begin{proof}
For a given direction vector $(d_{\lambda_c},d_w)$ and gradient of the upper level function $(\nabla_{\lambda_c} F(\lambda^{\circ}_{c},w^{\circ}; S^V), \nabla_{w} F(\lambda^{\circ}_{c},w^{\circ}; S^V))$ at the point $(\lambda^{\circ}_{c},w^{\circ})$, clearly $(d_{\lambda_c},d_w)$ represents the upper level descent direction if the dot product $\nabla_{\lambda_c} F(\lambda^{\circ}_{c},w^{\circ}; S^V)^{T} d_{\lambda_c} + \nabla_{w} F(\lambda^{\circ}_{c},w^{\circ}; S^V)^{T} d_{w} < 0$.
Also if $D^{\circ}$ is the acceptable set of direction vectors at $(\lambda^{\circ}_{c},w^{\circ})$, then the following would lead to the steepest descent direction for \eqref{mod:continuousHyp} at the point $(\lambda^{\circ}_{c},w^{\circ})$.
\begin{align*}
\argmin_{d_{\lambda_c},d_w} \{ \nabla_{\lambda_c} F(\lambda^{\circ}_{c},w^{\circ}; S^V)^{T} d_{\lambda_c} + \nabla_{w} F(\lambda^{\circ}_{c},w^{\circ}; S^V)^{T} d_{w}\\
: (d_{\lambda_c},d_w) \in D^{\circ} \}\\
\end{align*}
We know that $w^{\circ} \in \argmin_w \{f(\lambda_c^{\circ},w; S^{T})\}$, therefore, $\nabla_w f(\lambda_c^{\circ},w^{\circ}; S^{T}) = 0$. If $\lambda_c$ changes infinitesimally, as $\lim_{t\to0} \lambda^{\circ}_c + t d_{\lambda_c}$, we would like to know $\lim_{t\to0}  w^{\circ} + t d_{w}$, such that, 
\begin{align}
d_w \in \argmin_{d_w} \{f(\lambda_c^{\circ}+t d_{\lambda_c},w^{\circ} + t d_{w}; S^{T})\} \label{eq:llmin}
\end{align}
We made the assumption that the lower level problem always has an optimal solution for any given upper level vector, so an optimal $d_w$ exists for a given value of $t$ and $d_{\lambda_c}$. Essentially, at $(\lambda^{\circ}_{c},w^{\circ})$ when the upper level vector changes along the direction $d_{\lambda_c}$, we want to know the direction $d_{w}$ along which the lower level vector should change so that it remains optimal for the lower level problem.

We have assumed $f(\lambda_c^{\circ},w^{\circ})$ to be twice differentiable, therefore, we can write its Taylor's expansion around $(\lambda_c^{\circ},w^{\circ})$ with second-order approximation as follows (dropping $S^T$ for brevity):
\begin{align*}
& f(\lambda_c^{\circ}+t d_{\lambda_c},w^{\circ}+t d_{w}) = f(\lambda_c^{\circ},w^{\circ}) + \\ 
& \nabla_{\lambda_c} f(\lambda_c^{\circ},w^{\circ})^T td_{\lambda_c} + \nabla_{w} f(\lambda_c^{\circ},w^{\circ})^T td_{w} + \\
& \frac{t^2}{2}\left[ {\begin{array}{cc}
    d_{\lambda_{c}} \\
     d_{w} \\
  \end{array} } \right]^T \nabla_{(\lambda_c,w)}^{2} f(\lambda^{\circ}_{c},w^{\circ}) \left[ {\begin{array}{cc}
    d_{\lambda_{c}} \\
     d_{w} \\
  \end{array} } \right]
\end{align*}
The above expansion has been written as 4 terms, where the second and third terms are first-order terms written in two parts. Since $\nabla_w f(\lambda_c^{\circ},w^{\circ}) = 0$, the third term can be ignored. Therefore, we get the following:
\begin{align*}
   & \min_{d_w} f(\lambda_c^{\circ}+t d_{\lambda_c},w^{\circ} + t d_{w}; S^{T}) = \\ 
   & f(\lambda_c^{\circ},w^{\circ}) + 
    \nabla_{\lambda_c} f(\lambda_c^{\circ},w^{\circ})^T td_{\lambda_c} + \\
& \frac{t^2}{2} \min_{d_w} \left[ {\begin{array}{cc}
    d_{\lambda_{c}} \\
     d_{w} \\
  \end{array} } \right]^T \nabla_{(\lambda_c,w)}^{2} f(\lambda^{\circ}_{c},w^{\circ}) \left[ {\begin{array}{cc}
    d_{\lambda_{c}} \\
     d_{w} \\
  \end{array} } \right]
\end{align*}
which implies,
\begin{align*}
   & \argmin_{d_w} \Biggl\{ f(\lambda_c^{\circ}+t d_{\lambda_c},w^{\circ} + t d_{w}; S^{T}) \Biggr\} = \\ 
& \argmin_{d_w} \Biggl\{ \left[ {\begin{array}{cc}
    d_{\lambda_{c}} \\
     d_{w} \\
  \end{array} } \right]^T \nabla_{(\lambda_c,w)}^{2} f(\lambda^{\circ}_{c},w^{\circ}) \left[ {\begin{array}{cc}
    d_{\lambda_{c}} \\
     d_{w} \\
  \end{array} } \right] \Biggr\}
\end{align*}
Therefore, \eqref{eq:llmin} can be written as follows:
\begin{align}
d_w \in \argmin_{d_w} \Biggl\{ \left[ {\begin{array}{cc}
    d_{\lambda_{c}} \\
     d_{w} \\
  \end{array} } \right]^T \nabla_{(\lambda_c,w)}^{2} f(\lambda^{\circ}_{c},w^{\circ}) \left[ {\begin{array}{cc}
    d_{\lambda_{c}} \\
     d_{w} \\
  \end{array} } \right] \Biggr\} \label{eq:llmin2}
\end{align}
solving which gives us an optimal $d_w$ for a given $d_{\lambda_c}$. We want that $(d_{\lambda_c},d_w)$ pair that leads to the steepest descent direction while ensuring $d_w$ optimality for any $d_{\lambda_c}$, which we get by solving \eqref{mod:steepestDescent1}. Note that we additionally have $-1 \le d_{\lambda_c} \le 1$ as a constraint at the upper level which restricts the magnitude of the vector otherwise \eqref{mod:steepestDescent1} will be unbounded. This completes the proof of the theorem.
\end{proof}

Interestingly, the same results can also be arrived at as a special case of the results discussed in \cite{savard1994steepest}. 
Next, we attempt to simplify the results further. Given that the lower level problem is an unconstrained optimization problem, we can write the first-order optimality conditions for the lower level problem in \eqref{mod:steepestDescent1}. Let the symmetric matrix $\nabla_{(\lambda_c,w)}^{2} f(\lambda^{\circ}_{c},w^{\circ}; S^{T})$ be denoted as:
$$
\Bigl[h_{ij}\Bigr]_{i=1,j=1}^{p+q,p+q} = \nabla_{(\lambda_c,w)}^{2} f(\lambda^{\circ}_{c},w^{\circ}; S^{T}),
$$
where $p$ and $q$ denote the dimensions of $\lambda_c$ (or $d_{\lambda_c}$) and $w$ (or $d_w$), respectively. Then, the first order conditions for the lower level problem in \eqref{mod:steepestDescent1} can be written as follows:
$$
\Bigl[h_{ij}\Bigr]_{i=p+1,j=1}^{p+q,p+q} \left[ {\begin{array}{cc}
    d_{\lambda_{c}} \\
     d_{w} \\
  \end{array} } \right] = 0
$$
This reduces formulation \eqref{mod:steepestDescent1} into a linear program solving which provides us the steepest descent direction for \eqref{mod:continuousHyp}:
\begin{align}\label{mod:steepestDescentLP}
\begin{split}
\minimize_{d_{\lambda_c},d_w} \quad & \nabla_{\lambda_c} F(\lambda^{\circ}_{c},w^{\circ}; S^V)^{T} d_{\lambda_c} + \nabla_{w} F(\lambda^{\circ}_{c},w^{\circ}; S^V)^{T} d_{w}\\
\st  & \\
 &  \Bigl[h_{ij}\Bigr]_{i=p+1,j=1}^{p+q,p+q} \left[ {\begin{array}{cc}
    d_{\lambda_{c}} \\
    d_{w} \\
  \end{array} } \right] = 0\\
 & -1 \le d_{\lambda_c} \le 1
 \end{split}
\end{align}
We relax the equality constraints in the linear program into inequalities by choosing a small value $\delta$, which leads to the following program:
\begin{align}\label{mod:steepestDescentLP2}
\begin{split}
\minimize_{d_{\lambda_c},d_w} \quad & \nabla_{\lambda_c} F(\lambda^{\circ}_{c},w^{\circ}; S^V)^{T} d_{\lambda_c} + \nabla_{w} F(\lambda^{\circ}_{c},w^{\circ}; S^V)^{T} d_{w}\\
\st  & \\
 &  -\delta \le \Bigl[h_{ij}\Bigr]_{i=p+1,j=1}^{p+q,p+q} \left[ {\begin{array}{cc}
    d_{\lambda_{c}} \\
    d_{w} \\
  \end{array} } \right] \le \delta\\
 & -1 \le d_{\lambda_c} \le 1
 \end{split}
\end{align}
Let the optimal solution to the above problem be denoted as $d_{\lambda_{c}}^{\ast},d_{w}^{\ast}$, then new models along the descent direction can be generated as follows:
\begin{align}
M(\lambda_{c}^{\circ}+td_{\lambda_c}^{\ast},w^{\circ}+td_w^{\ast}): t>0
\end{align}
One may choose a model in the vicinity of $M(\lambda_{c}^{\circ},w^{\circ})$ for a particular value of $t$ (say $t^{\ast}$), such that the validation loss for $M(\lambda_{c}^{\circ}+t^{\ast}d_{\lambda_c}^{\ast},w^{\circ}+t^{\ast}d_w^{\ast})$ is smaller than the validation loss for $M(\lambda_{c}^{\circ},w^{\circ})$.

For the experimentation in this section, the upper and the lower level objective functions in \eqref{mod:steepestDescent1} have been chosen as follows:
\begin{align}
    F(w) & = l(w; S^V)\\
    f(\lambda_c, w) & = l(w; S^{T}) + \Theta(w,\lambda_c)
\end{align}
where $l$ is the average cross-entropy loss function, and $\Theta$ is the L$_2$-regularization function. Note that with these choice of functions, the upper level objective is defined only with respect to $w$ and does not directly involve $\lambda_c$. The optimal model parameter vector is a function of the hyperparameter vector, therefore, one often denotes the optimal model parameters as $w(\lambda_c)$, due to which $F$ has an indirect dependency on $\lambda_c$. With a single regularization hyperparameter we have, $\Theta(w,\lambda_c) =  \lambda_c \sum_{i=1}^{q} w_{i}^2$. Such a regularization approach is also known as weight decay as it promotes the model parameters to be smaller in magnitude, thus preventing overfitting on the training examples. L$_2$-regularization can be extended with additional terms when $\lambda_c$ is a vector, with each term in $\lambda_c$ penalizing a different set of sum of squared weights.

The model $M(\lambda_{c}^{\circ},w^{\circ})$ represents a feasible solution to the bilevel optimization problem in \eqref{mod:steepestDescent1} for which we identify the steepest descent direction by solving the linear program \eqref{mod:steepestDescentLP2}. Let the solution of the linear program be denoted as $(d_{\lambda_{c}}^{\ast},d_{w}^{\ast})$ then the new models can be generated along this direction as follows:
\begin{align*}
  \left[ {\begin{array}{cc}
    \lambda^{n}_c \\
     w^{n} \\
  \end{array} } \right] = \left[ {\begin{array}{cc}
    \lambda^{_\circ}_c \\
     w^{_\circ} \\
  \end{array} } \right] + t\left[ {\begin{array}{cc}
    d_{\lambda_c}^{\ast} \\
     d_w^{\ast} \\
  \end{array} } \right]
\end{align*}
If $M(\lambda_{c}^{\circ}+td_{\lambda_c}^{\ast},w^{\circ}+td_w^{\ast})$ denotes various models along the steepest descent direction, then let $M(\lambda_{c}^{\circ}+t^{\ast}d_{\lambda_c}^{\ast},w^{\circ}+t^{\ast}d_w^{\ast})$ be the model that minimizes the validation loss $l(w; S^V)$. We refer to this exercise as a fine-tuning exercise that will be later incorporated into a genetic algorithm for the purpose of hyper local search. Figure~\ref{fig:explain} shows the fine-tuning exercise graphically and provides a visual representation on how validation and training loss may change along this direction. While validation loss is expected to improve along the descent direction, not much can be inferred about the training loss. 
\begin{center} 
  \includegraphics[scale=0.34]{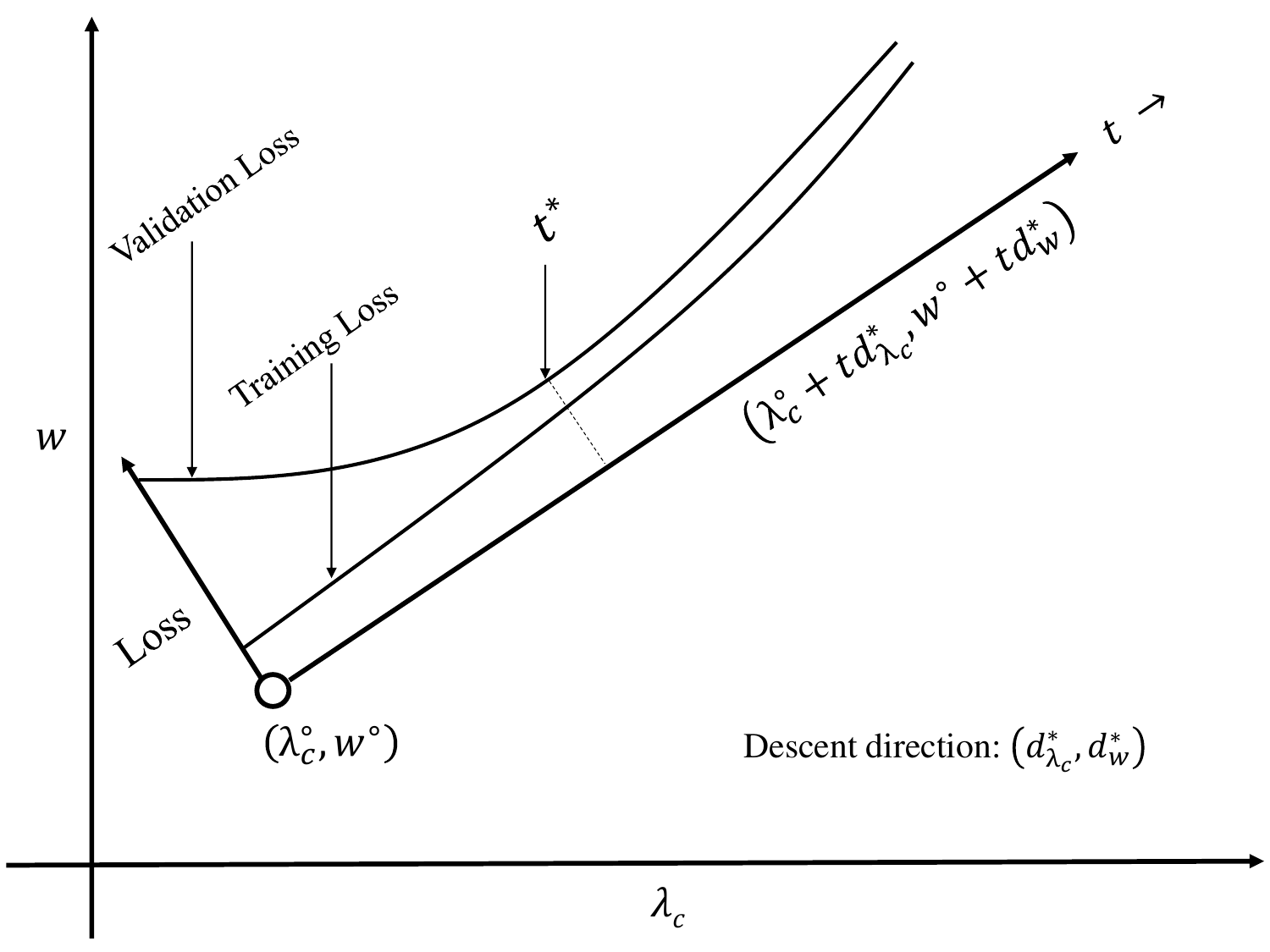}
  \captionof{figure}{$(\lambda_c,w)$ space with the descent direction $(d_{\lambda_{c}}^{\ast},d_{w}^{\ast})$ at $(\lambda_{c}^{\circ},w^{\circ})$. The training and validation loss are also shown along the descent direction.}
  \label{fig:explain}
\end{center}

Next, we present some results for demonstrating the effectiveness of the approach on MNIST dataset \cite{lecun2010mnist} in the context of multi-layer perceptron (MLP) architecture. In MNIST dataset the objective is to solve a multi-classification problem for which we create an MLP model with 5 hidden layers and 50 neurons in each layer. We randomly sample 5,000 data points for training, 2,500 data points for validation and 10,000 data points for testing. The reason for choosing fewer samples for training is to allow overfitting to happen when we create our first model $M(\lambda_{c}^{\circ},w^{\circ})$ without any regularization, i.e. $\lambda_{c}^{\circ}=0$, using stochastic gradient descent. Thereafter, we consider three cases:
\begin{enumerate}
    \item MNIST (1HP):  Solve the linear program in \eqref{mod:steepestDescentLP2} with 1 regularization hyperparameter, i.e. a single regularization hyperparameter for the hidden layers and the output layer
    \item MNIST (2HP): Solve the linear program in \eqref{mod:steepestDescentLP2} with 2 regularization hyperparameters, i.e. 1 regularization hyperparameter for the hidden layers and 1 regularization hyperparameter for the output layer
    \item MNIST (6HP): Solve the linear program in \eqref{mod:steepestDescentLP2} with 6 regularization hyperparameters, i.e. 5 regularization hyperparameters for the hidden layers and 1 regularization hyperparameter for the output layer
\end{enumerate}

\begin{figure}[ht]
\begin{center} 
  \includegraphics[width=.40\textwidth]{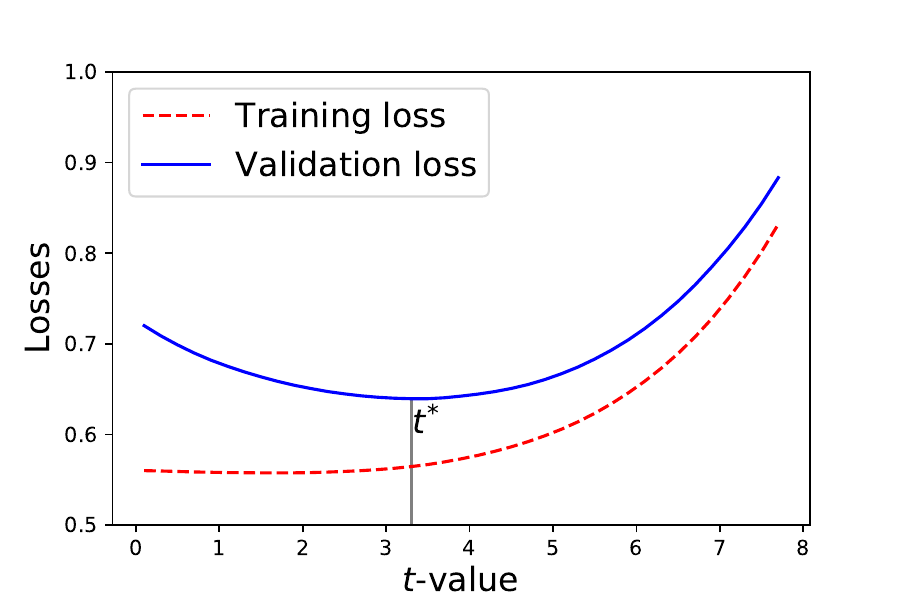}
  \captionof{figure}{Training and validation losses while moving along the steepest descent direction for MNIST (1HP).}
  \label{fig:hyper1}
\end{center}
\end{figure}
\begin{figure}[ht]
\begin{center} 
  \includegraphics[width=.40\textwidth]{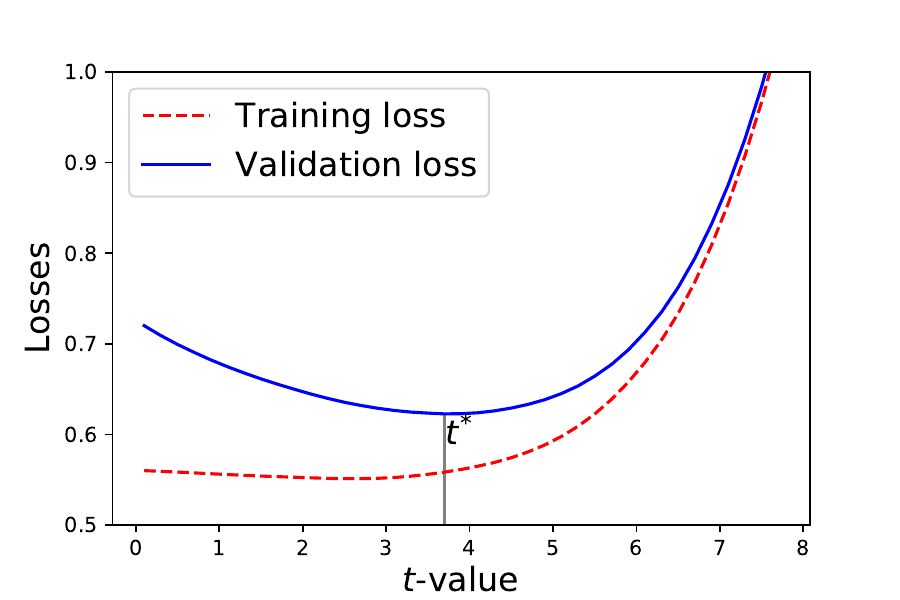}
  \captionof{figure}{Training and validation losses while moving along the steepest descent direction for MNIST (2HP).}
  \label{fig:hyper2}
\end{center}
\end{figure}
\begin{figure}[ht]
\begin{center} 
  \includegraphics[width=.40\textwidth]{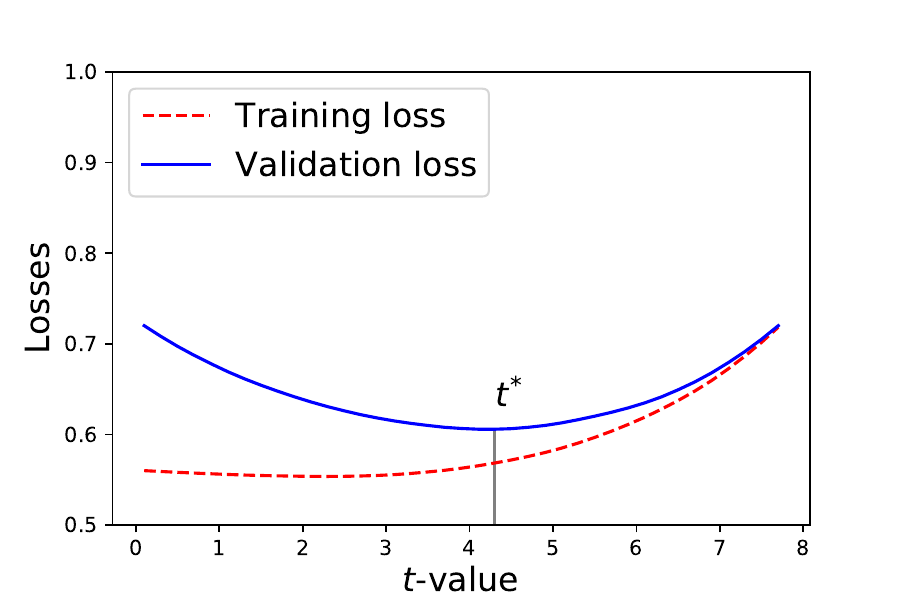}
  \captionof{figure}{Training and validation losses while moving along the steepest descent direction for MNIST (6HP).}
  \label{fig:hyper6}
\end{center}
\end{figure}
\begin{figure}[ht]
\begin{center} 
  \includegraphics[width=.40\textwidth]{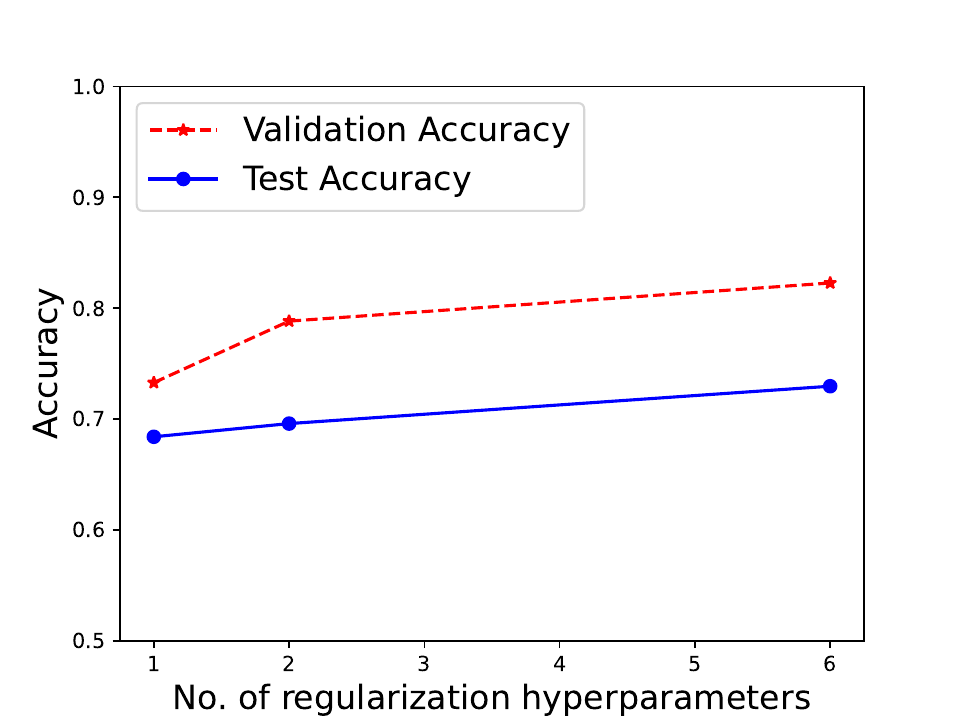}
  \captionof{figure}{Validation and test accuracy with increase in number of regularization hyperparameters. The base model with no regularization hyperparameters had the lowest test accuracy of $0.6776$.}
  \label{fig:va_te_acc}
\end{center}
\end{figure}

We present the results of fine-tuning for MNIST (1HP), MNIST (2HP) and MNIST (6HP) through Figures~\ref{fig:hyper1},~\ref{fig:hyper2} and~\ref{fig:hyper6}, respectively. For different values of $t$, we get models with (approximately) locally optimal weights on the training data. The figures show how the training and the validation accuracy for these models change as we increase $t$. Moving along the steepest descent direction leads to a better validation loss with the best being at $t^{\ast}$. The models corresponding to $t^{\ast}$ are considered to be the fine-tuned model for each of the cases. Figure~\ref{fig:va_te_acc} shows the validation and test accuracy of the models corresponding to $t^{\ast}$ for all the three cases. All the models have a better test performance than the base model with no regularization hyperparameters for which the testing accuracy was $0.6776$. Interestingly, the models improve with increase in number of regularization hyperparameters; however, this is expected to continue only until overfitting does not happen on the validation dataset. With too many regularization hyperparameters there can be overfitting on the validation dataset leading to poor performance on the test dataset.

\begin{figure*}[!ht]
\begin{center} 
  \includegraphics[width=.60\textwidth]{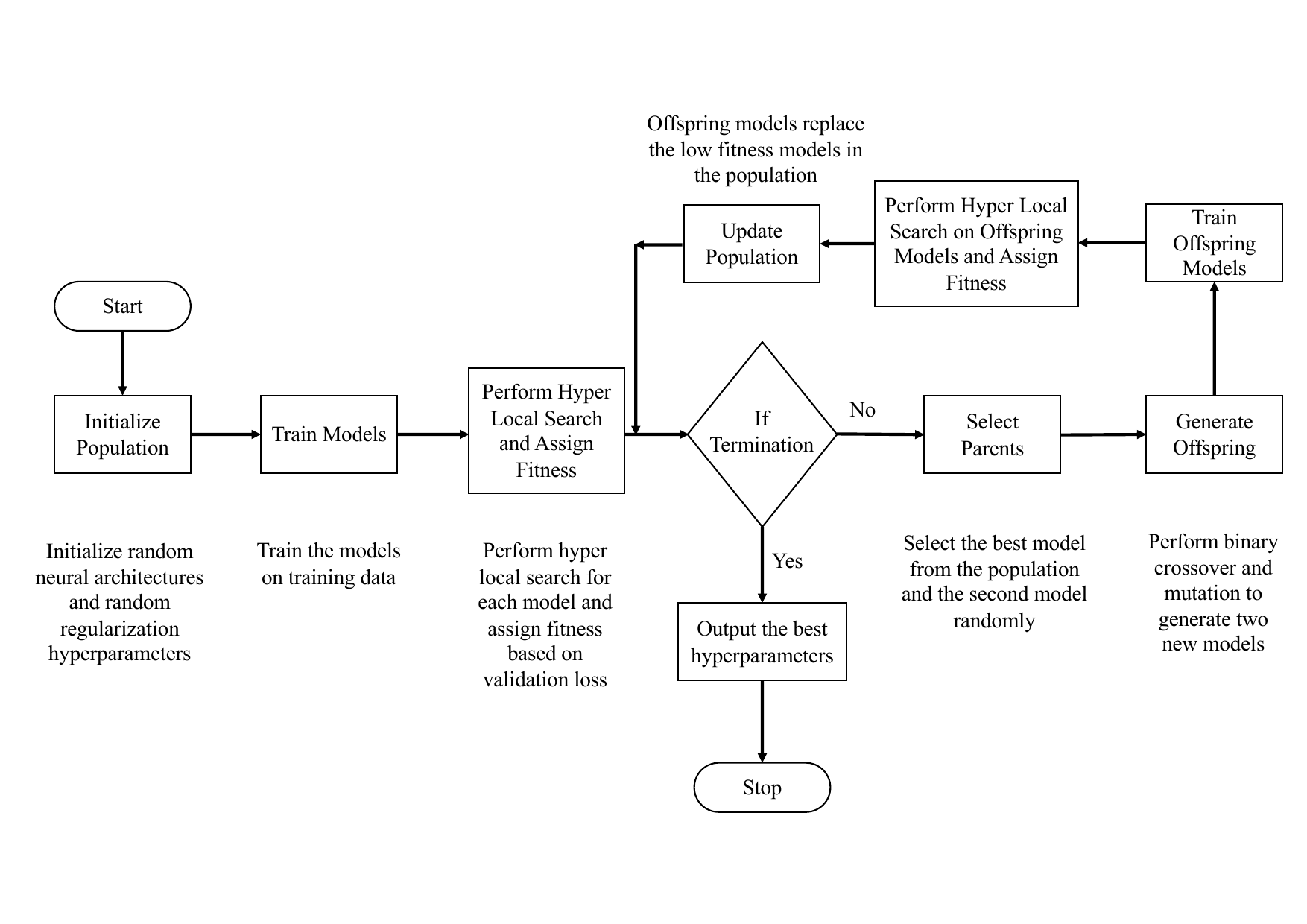}
  \vspace{-8mm}
  \captionof{figure}{Flowchart for the steady state micro-GA enhanced with linear program-based hyper local search for hyperparameter optimization.}
  \label{fig:flowchart}
\end{center}
\begin{center} 
\vspace{-10mm}
  \includegraphics[width=.75\textwidth]{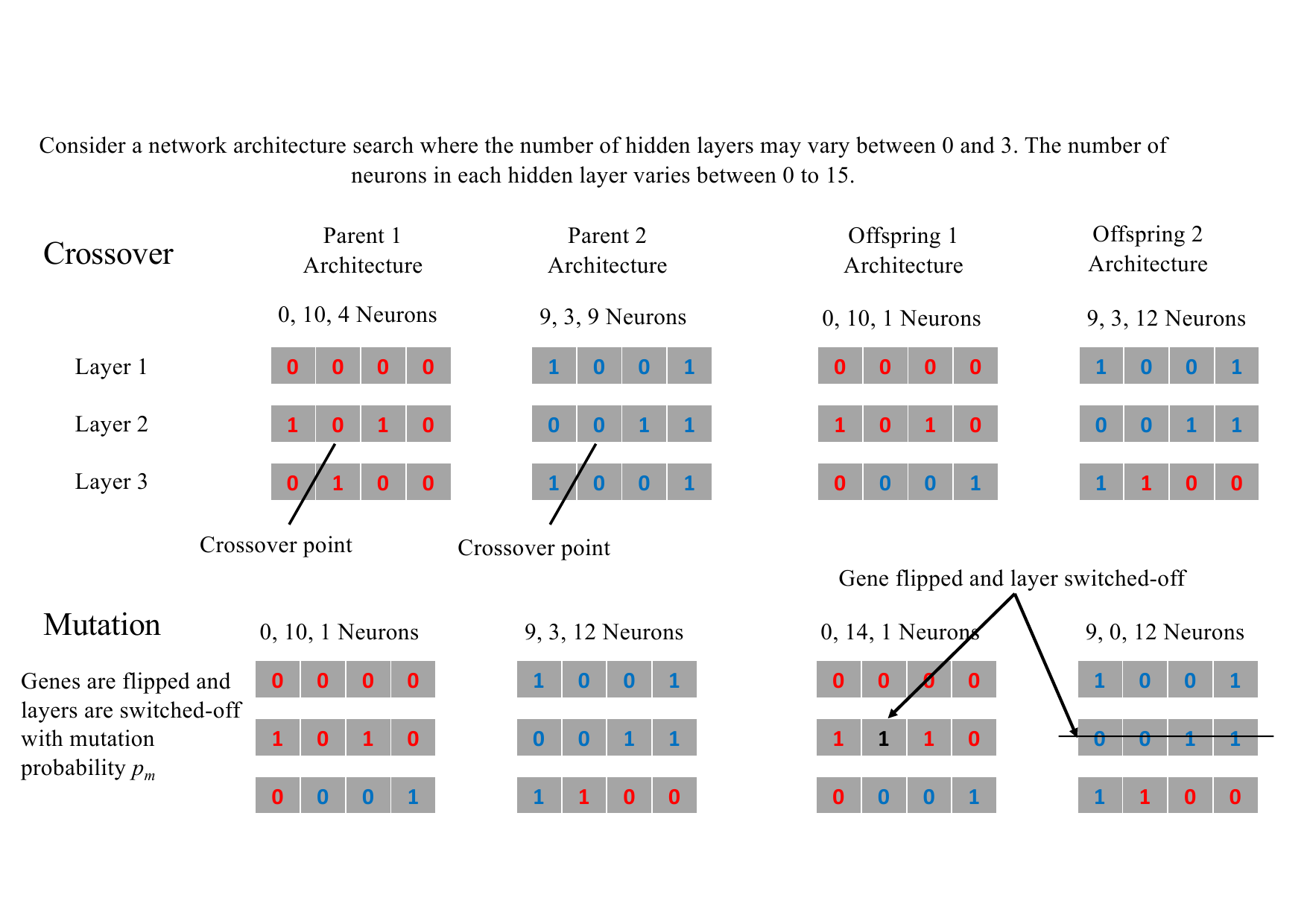}
  \vspace{-8mm}
  \captionof{figure}{Implementation of crossover and mutation on neural architectures.}
  \label{fig:operators}
\end{center}
\end{figure*}

\section{Micro Genetic Algorithm with Fine-tuning}\label{propmethod}
In this section we propose a steady-state micro-GA that utilizes the linear program-based hyper local search (fine-tuning) for the purpose of hyperparameter optimization. We will consider both discrete and continuous hyperparameters in our micro-GA. A steady state\footnote{In a generational genetic algorithm, the entire population is updated when moving from one generation to the other generation, while in a steady state genetic algorithm only a few members of the population are updated.} micro-GA starts with a small population and updates only a few solutions in each generation. Given the computational cost involved in hyperparameter optimization a steady state micro-GA is a viable option for hyperparameter search, which further gets enhanced with a linear program-based hyper local search on continuous hyperparameters. The flowchart for the micro-GA is provided in Figure~\ref{fig:flowchart}. In this paper, we consider three hyperparameters for our experiments, namely, number of hidden layers, number of neurons in each hidden layer, and regularization hyperparameters. 
On neural architecture hyperparameters (discrete) we use a binary crossover and mutation operator as shown in Figure~\ref{fig:operators}, while for regularization hyperparameters (continuous) we use the simulated binary crossover (SBX) and polynomial mutation operators \cite{deb1995simulated}. The algorithm terminates based on maximum number of generations.
The parameter settings for the micro-GA are as follows:
\begin{enumerate}
    \item Crossover probability: $p_c=0.9$
    \item Mutation probability: $p_m=0.1$
    \item Population size: 10
    \item Maximum generations: 15
    \item Offspring produced in each generation: 2
\end{enumerate}
The micro-GA can be run with or without hyper local search that we will explicitly specify while presenting the results.


\section{Results}\label{sec:experiments}
 In this section, we provide the results of micro-GA (with and without hyper local search) on two datasets. We also provide results for grid search and random search, with and without hyper local search. The objective is not to compare the performance of the genetic algorithm against naive techniques like grid search and random search, but to demonstrate that the linear program based-hyper local search proposed in the paper provides benefits in all the approaches where it is incorporated. The two datasets considered in the paper are MNIST \cite{lecun2010mnist} and CIFAR \cite{krizhevsky2009learning} with both involving a multi-class classification problem to be solved. We work with the multi-layer perceptron architecture with the following settings throughout the paper.

\begin{enumerate}
    \item Hidden layers: 0-3 (hyperparameter)
    \item Number of neurons in each layer: 0-15 (hyperparameter)
    \item L$_2$ Regularization: 1-dimensional where all weights are penalized in a single term
    \item Activation functions: ReLU in hidden layers and Softmax in output layer
    \item Optimizer: Adam
\end{enumerate}

\begin{figure*}
\centering
\begin{minipage}[t]{.49\textwidth}
\begin{center}
  \includegraphics[scale =0.48]{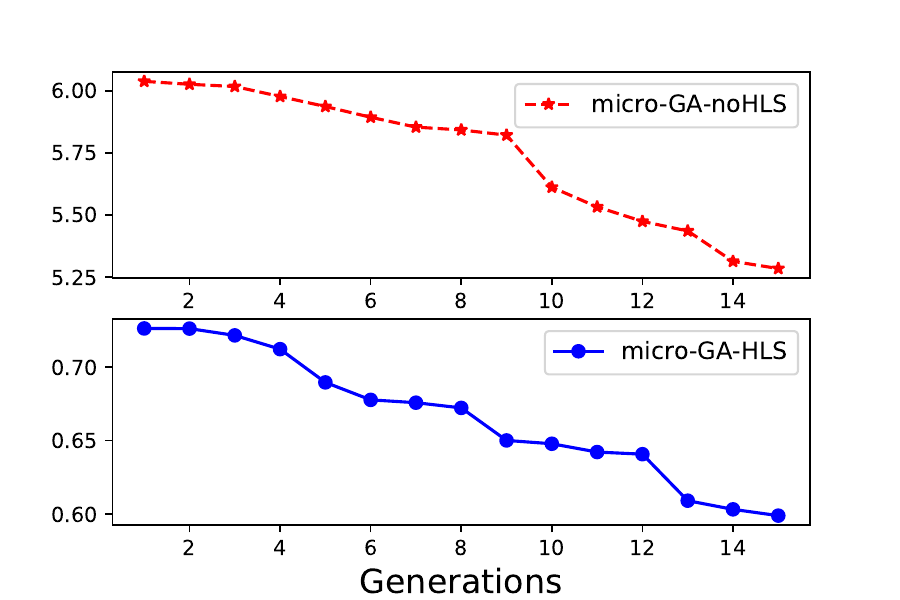}
  \captionof{figure}{Validation loss performance of micro-GA with hyper local search (micro-GA-HLS) and without hyper local search (micro-GA-noHLS) over 15 generations on MNIST dataset.}
  \label{fig:mnist-gens}
\end{center}
\end{minipage}\hfill
\begin{minipage}[t]{.49\textwidth}
\begin{center}
  \includegraphics[scale =0.48]{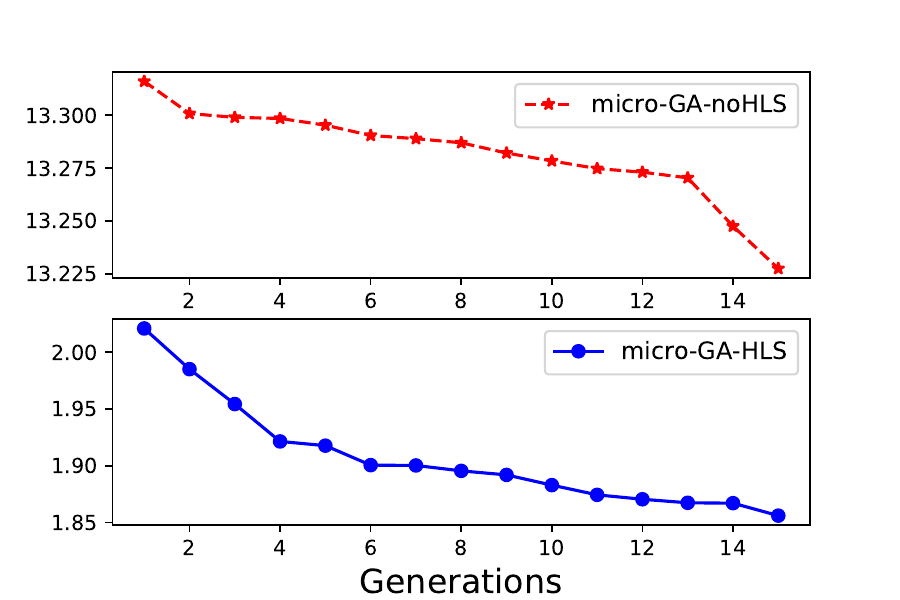}
  \captionof{figure}{Validation loss performance of micro-GA with hyper local search (micro-GA-HLS) and without hyper local search (micro-GA-noHLS) over 15 generations on CIFAR-10 dataset.}
  \label{fig:cifar-gens}
\end{center}
\end{minipage}
\end{figure*}

\subsection{MNIST Dataset}
In the original MNIST dataset, there are 60,000 training data points and 10,000 testing data points with 10 classes consisting of handwritten digits. The digits are $28 \times 28$ pixel gray-scale images. From the original training dataset we randomly sample 5000 data points that we use for training, and 2500 data points that we use for validation. The entire 10000 data points from the original testing dataset are used as testing data points. We report results for three approaches in this section, i.e., grid search, random search and genetic algorithm. For all the three approaches we report the results with and without linear program-based hyper local search. 
Number of models searched using grid search, random search and genetic algorithm are restricted to 40 in number. 
Table~\ref{tab:mnist} provides detailed results in terms of accuracy for validation and testing for various models. It is quite clear that the results with hyper local search are better in all cases.
Figure~\ref{fig:mnist-gens} shows the convergence of the micro-GA over 15 generations for a run with and without hyper local search. The losses in the case of hyper local search are much lower than the losses in the case of no hyper local search right from the start of the algorithm.

\begin{table}[]
\begin{center}
\caption{Validation and test accuracy from 40 samples of grid search, random search and micro-GA for MNIST dataset with 1 hyperparameter.}
\begin{tabular}{|l|ll|ll|}
\hline
{\color[HTML]{000000} MNIST}         & \multicolumn{2}{c|}{{\color[HTML]{000000} Without hyper local search}} & \multicolumn{2}{c|}{{\color[HTML]{000000} With hyper local search}}    \\ \hline
{\color[HTML]{000000} }              & {\color[HTML]{000000} Va. Acc.} & {\color[HTML]{000000} Te. Acc} & {\color[HTML]{000000} Va. Acc.} & {\color[HTML]{000000} Te. Acc} \\ \hline
{\color[HTML]{000000} Grid search}   & {\color[HTML]{000000} 0.7288}   & {\color[HTML]{000000} 0.7128}  & {\color[HTML]{000000} 0.7356}   & {\color[HTML]{000000} 0.7250}   \\ \hline
{\color[HTML]{000000} Random search} & {\color[HTML]{000000} 0.7212}   & {\color[HTML]{000000} 0.7142}  & {\color[HTML]{000000} 0.7652}   & {\color[HTML]{000000} 0.7626}  \\ \hline
{\color[HTML]{000000} micro-GA}      & {\color[HTML]{000000} 0.7368}   & {\color[HTML]{000000} 0.7361}  & {\color[HTML]{000000} 0.7941}   & {\color[HTML]{000000} 0.7788}  \\ \hline
\end{tabular}\label{tab:mnist}
\end{center}
\end{table}

\subsection{CIFAR-10 Dataset}
CIFAR-10 dataset has been used in our study that consists of 50,000 data points in training dataset and 10,000 data points in testing dataset with 10 classes. Each data point is a $32 \times 32$ pixel coloured image of an object. For CIFAR-10 we randomly sample 5000 data points for training, 2500 data points for validation and 10000 data points for testing from the original datasets. The results are presented in a similar manner as before for grid search, random search and genetic algorithm. 
Table~\ref{tab:cifar} clearly demonstrates the benefit of hyper local search for all the cases once again. Figure~\ref{fig:cifar-gens} shows the convergence of the micro-GA over generations for a run with and without hyper local search. Clearly, right from the start of the algorithm, the losses in the case of hyper local search are significantly lower than the losses in the case of no hyper local search.

\begin{table}[]
\begin{center}
\caption{Validation and test accuracy from 40 samples of grid search, random search and micro-GA for CIFAR-10 dataset with 1 hyperparameter.}
\begin{tabular}{|l|ll|ll|}
\hline
{\color[HTML]{000000} CIFAR-10}      & \multicolumn{2}{c|}{{\color[HTML]{000000} Without hyper local search}} & \multicolumn{2}{c|}{{\color[HTML]{000000} With hyper local search}}    \\ \hline
{\color[HTML]{000000} }              & {\color[HTML]{000000} Va. Acc.} & {\color[HTML]{000000} Te. Acc} & {\color[HTML]{000000} Va. Acc.} & {\color[HTML]{000000} Te. Acc} \\ \hline
{\color[HTML]{000000} Grid search}   & {\color[HTML]{212121} 0.1768}   & {\color[HTML]{212121} 0.1722}  & {\color[HTML]{212121} 0.2272}   & {\color[HTML]{212121} 0.2204}  \\ \hline
{\color[HTML]{000000} Random search} & {\color[HTML]{212121} 0.1872}   & {\color[HTML]{212121} 0.1819}  & {\color[HTML]{212121} 0.2432}   & {\color[HTML]{212121} 0.2415}  \\ \hline
{\color[HTML]{000000} micro-GA}      & {\color[HTML]{212121} 0.2496}   & {\color[HTML]{212121} 0.2511}  & {\color[HTML]{212121} 0.2781}   & {\color[HTML]{212121} 0.2701}  \\ \hline
\end{tabular}\label{tab:cifar}
\end{center}
\end{table}

\section{Conclusions}\label{sec:conclusions}
In our work, we have proposed a linear program-based approach that can be used to fine-tune any machine learning model by searching for better continuous hyperparameters in the vicinity of the hyperparameters chosen by the user. We formulated the hyperparameter optimization problem as a bilevel program and then showed how the gradient of the bilevel program can be used for fine-tuning the continuous hyperparameters and the model parameters. We first demonstrated the working of this principle on individual models and then incorporated this idea as hyper local search in a stead-state micro-GA. Our results show that when the proposed idea is incorporated in naive techniques like grid search and random search, or in a genetic algorithm, it benefits by producing models that perform better on validation and test data. We evaluated the idea on two datasets, MNIST and CIFAR-10, and the results obtained from all the runs are very promising. We believe that this is a fundamental contribution as the approach can be incorporated in any hyperparameter optimization algorithm. However, the approach requires Hessian computation that can make it prohibitive for large problems. As an extension, we aim to reduce the computational cost arising from Hessian computations by using approximate-Hessian techniques.


\bibliographystyle{abbrv}
\bibliography{references}

\begin{thebibliography}{10}

\bibitem{AlHoPa92}
F.~Al-Khayyal, R.~Horst, and P.~Pardalos.
\newblock Global optimization of concave functions subject to quadratic
  constraints: an application in nonlinear bilevel programming.
\newblock {\em Annals of Operations Research}, 34:125--147, 1992.

\bibitem{angelo13}
J.~Angelo, E.~Krempser, and H.~Barbosa.
\newblock Differential evolution for bilevel programming.
\newblock In {\em Proceedings of the 2013 Congress on Evolutionary Computation
  (CEC-2013)}. IEEE Press, 2013.

\bibitem{angelo2014differential}
J.~S. Angelo, E.~Krempser, and H.~J.~C. Barbosa.
\newblock Differential evolution assisted by a surrogate model for bilevel
  programming problems.
\newblock In {\em Evolutionary Computation (CEC), 2014 IEEE Congress on}, pages
  1784--1791. IEEE, 2014.

\bibitem{bard2013practical}
J.~F. Bard.
\newblock {\em Practical bilevel optimization: algorithms and applications},
  volume~30.
\newblock Springer Science \& Business Media, 2013.

\bibitem{bengio2000gradient}
Y.~Bengio.
\newblock Gradient-based optimization of hyperparameters.
\newblock {\em Neural computation}, 12(8):1889--1900, 2000.

\bibitem{bennett08}
K.~P. Bennett, G.~Kunapuli, J.~Hu, and J.~Pang.
\newblock Bilevel optimization and machine learning.
\newblock In {\em Proceedings of the 2008 World Congress on Computational
  Intelligence}, pages 25--47, 2008.

\bibitem{bergstra2011algorithms}
J.~S. Bergstra, R.~Bardenet, Y.~Bengio, and B.~K{\'e}gl.
\newblock Algorithms for hyper-parameter optimization.
\newblock In {\em Advances in neural information processing systems}, pages
  2546--2554, 2011.

\bibitem{colson2005trust}
B.~Colson, P.~Marcotte, and G.~Savard.
\newblock A trust-region method for nonlinear bilevel programming: algorithm
  and computational experience.
\newblock {\em Computational Optimization and Applications}, 30(3):211--227,
  2005.

\bibitem{deb1995simulated}
K.~Deb and R.~B. Agrawal.
\newblock Simulated binary crossover for continuous search space.
\newblock {\em Complex systems}, 9(2):115--148, 1995.

\bibitem{dempe2002foundations}
S.~Dempe.
\newblock {\em Foundations of bilevel programming}.
\newblock Springer Science \& Business Media, 2002.

\bibitem{EdBa91}
T.~Edmunds and J.~Bard.
\newblock Algorithms for nonlinear bilevel mathematical programming.
\newblock {\em {IEEE} Transactions on Systems, Man, and Cybernetics},
  21:83--89, 1991.

\bibitem{feurer2019hyperparameter}
M.~Feurer and F.~Hutter.
\newblock Hyperparameter optimization.
\newblock {\em Automated machine learning: Methods, systems, challenges}, pages
  3--33, 2019.

\bibitem{FoMc81}
J.~Fortuny-Amat and B.~McCarl.
\newblock A representation and economic interpretation of a two-level
  programming problem.
\newblock {\em Journal of the Operational Research Society}, 32:783--792, 1981.

\bibitem{franceschi2018bilevel}
L.~Franceschi, P.~Frasconi, S.~Salzo, R.~Grazzi, and M.~Pontil.
\newblock Bilevel programming for hyperparameter optimization and
  meta-learning.
\newblock In {\em International Conference on Machine Learning}, pages
  1568--1577. PMLR, 2018.

\bibitem{hutter2011sequential}
F.~Hutter, H.~H. Hoos, and K.~Leyton-Brown.
\newblock Sequential model-based optimization for general algorithm
  configuration.
\newblock In {\em International conference on learning and intelligent
  optimization}, pages 507--523. Springer, 2011.

\bibitem{islam2017surrogate}
M.~M. Islam, H.~K. Singh, and T.~Ray.
\newblock A surrogate assisted approach for single-objective bilevel
  optimization.
\newblock {\em IEEE Transactions on Evolutionary Computation}, 21(5):681--696,
  2017.

\bibitem{kleinert2021computing}
T.~Kleinert and M.~Schmidt.
\newblock Computing feasible points of bilevel problems with a penalty
  alternating direction method.
\newblock {\em INFORMS Journal on Computing}, 33(1):198--215, 2021.

\bibitem{krizhevsky2009learning}
A.~Krizhevsky, G.~Hinton, et~al.
\newblock Learning multiple layers of features from tiny images.
\newblock 2009.

\bibitem{lecun2010mnist}
Y.~LeCun, C.~Cortes, and C.~Burges.
\newblock Mnist handwritten digit database.
\newblock 2010.

\bibitem{li2017hyperband}
L.~Li, K.~Jamieson, G.~DeSalvo, A.~Rostamizadeh, and A.~Talwalkar.
\newblock Hyperband: A novel bandit-based approach to hyperparameter
  optimization.
\newblock {\em The Journal of Machine Learning Research}, 18(1):6765--6816,
  2017.

\bibitem{lorraine2018stochastic}
J.~Lorraine and D.~Duvenaud.
\newblock Stochastic hyperparameter optimization through hypernetworks.
\newblock {\em arXiv preprint arXiv:1802.09419}, 2018.

\bibitem{mackay2019self}
M.~MacKay, P.~Vicol, J.~Lorraine, D.~Duvenaud, and R.~Grosse.
\newblock Self-tuning networks: Bilevel optimization of hyperparameters using
  structured best-response functions.
\newblock {\em arXiv preprint arXiv:1903.03088}, 2019.

\bibitem{maclaurin2015gradient}
D.~Maclaurin, D.~Duvenaud, and R.~Adams.
\newblock Gradient-based hyperparameter optimization through reversible
  learning.
\newblock In {\em International Conference on Machine Learning}, pages
  2113--2122, 2015.

\bibitem{marcotte2001trust}
P.~Marcotte, G.~Savard, and D.~L. Zhu.
\newblock A trust region algorithm for nonlinear bilevel programming.
\newblock {\em Operations research letters}, 29(4):171--179, 2001.

\bibitem{pedregosa2016hyperparameter}
F.~Pedregosa.
\newblock Hyperparameter optimization with approximate gradient.
\newblock {\em arXiv preprint arXiv:1602.02355}, 2016.

\bibitem{savard1994steepest}
G.~Savard and J.~Gauvin.
\newblock The steepest descent direction for the nonlinear bilevel programming
  problem.
\newblock {\em Operations Research Letters}, 15(5):265--272, 1994.

\bibitem{sinha2020gradient}
A.~Sinha, T.~Khandait, and R.~Mohanty.
\newblock A gradient-based bilevel optimization approach for tuning
  hyperparameters in machine learning.
\newblock {\em arXiv preprint arXiv:2007.11022}, 2020.

\bibitem{my-joh20}
A.~Sinha, Z.~Lu, K.~Deb, and P.~Malo.
\newblock Bilevel optimization based on iterative approximation of multiple
  mappings.
\newblock {\em Journal of Heuristics}, 26(2):151--185, 2020.

\bibitem{my-ejor17}
A.~Sinha, P.~Malo, and K.~Deb.
\newblock Evolutionary algorithm for bilevel optimization using approximations
  of the lower level optimal solution mapping.
\newblock {\em European Journal of Operational Research}, 257(2):395--411,
  2017.

\bibitem{sinha2017review}
A.~Sinha, P.~Malo, and K.~Deb.
\newblock A review on bilevel optimization: From classical to evolutionary
  approaches and applications.
\newblock {\em IEEE Transactions on Evolutionary Computation}, 22(2):276--295,
  2017.

\bibitem{my-caor14}
A.~Sinha, P.~Malo, A.~Frantsev, and K.~Deb.
\newblock Finding optimal strategies in a multi-period multi-leader-follower
  stackelberg game using an evolutionary algorithm.
\newblock {\em Computers \& Operations Research}, 41:374--385, 2014.

\bibitem{gecco14}
A.~Sinha, P.~Malo, P.~Xu, and K.~Deb.
\newblock A bilevel optimization approach to automated parameter tuning.
\newblock In {\em Proceedings of the 16th Annual Genetic and Evolutionary
  Computation Conference (GECCO 2014)}. New York: ACM Press, 2014.

\bibitem{sinha2021solving}
A.~Sinha and V.~Shaikh.
\newblock Solving bilevel optimization problems using kriging approximations.
\newblock {\em IEEE Transactions on Cybernetics}, 2021.

\bibitem{my-swevo18}
A.~Sinha, T.~Soun, and K.~Deb.
\newblock Using karush-kuhn-tucker proximity measure for solving bilevel
  optimization problems.
\newblock {\em Swarm and evolutionary computation}, 44:496--510, 2019.

\bibitem{snoek2012practical}
J.~Snoek, H.~Larochelle, and R.~P. Adams.
\newblock Practical bayesian optimization of machine learning algorithms.
\newblock In {\em Advances in neural information processing systems}, pages
  2951--2959, 2012.

\bibitem{snoek2015scalable}
J.~Snoek, O.~Rippel, K.~Swersky, R.~Kiros, N.~Satish, N.~Sundaram, M.~Patwary,
  M.~Prabhat, and R.~Adams.
\newblock Scalable bayesian optimization using deep neural networks.
\newblock In {\em International conference on machine learning}, pages
  2171--2180, 2015.

\bibitem{TuMiVa93}
H.~Tuy, A.~Migdalas, and P.~V\"{a}rbrand.
\newblock A global optimization approach for the linear two-level program.
\newblock {\em Journal of Global Optimization}, 3:1--23, 1993.

\bibitem{WeHs91}
U.~Wen and S.~Hsu.
\newblock Linear bi-level programming problems - a review.
\newblock {\em Journal of the Operational Research Society}, 42:125--133, 1991.

\bibitem{WhAn93}
D.~White and G.~Anandalingam.
\newblock A penalty function approach for solving bi-level linear programs.
\newblock {\em Journal of Global Optimization}, 3:397--419, 1993.

\end{thebibliography}

\end{document}